\newtheorem{theorem}{Theorem}
\crefname{section}{Sec.}{Secs.}
\Crefname{section}{Section}{Sections}
\Crefname{table}{Table}{Tables}
\crefname{table}{Tab.}{Tabs.}
\newcommand{\norm}[1]{\left\lVert#1\right\rVert}
\newcommand{\abs}[1]{\left|#1\right|}
\newcommand{\qileft}{[\kern-0.15em[}
\newcommand{\qiLeft}{\left[\kern-0.4em\left[}
\newcommand{\qiright}{]\kern-0.15em]}
\newcommand{\qiRight}{\right]\kern-0.4em\right]}
\newcommand{\diag}{{\mbox{diag}}}
\newcommand{\m}{{\bm{m}}}
\newcommand{\s}{{\bm{s}}}
\newcommand{\B}{\mathcal{B}}
\newcommand{\D}{\mathcal{D}}
\newcommand{\E}{\mathcal{E}}
\newcommand{\F}{\mathcal{F}}
\newcommand{\G}{\mathcal{G}}
\renewcommand{\O}{\mathcal{O}}
\newcommand{\R}{\mathbb{R}}
\newcommand{\T}{\mathcal{T}}
\newcommand{\MSA}{{\rm MSA}}
\newcommand{\MLP}{{\rm MLP}}
\newcommand{\HZ}{\widehat{Z}}
\newcommand{\HB}{\widehat{\B}}
\newcommand{\HF}{\widehat{\F}}
\newcommand{\st}{{\rm s.t.}}
\begin{document}

\title{Patch Slimming for Efficient Vision Transformers}

\author{
	Yehui Tang\textsuperscript{\rm 1,2},
	Kai Han\textsuperscript{\rm 2},
	Yunhe Wang\textsuperscript{\rm 2}\thanks{Corresponding author.},
	Chang Xu\textsuperscript{\rm 3},\\
	Jianyuan Guo\textsuperscript{\rm 2,3},
	Chao Xu\textsuperscript{\rm 1}, 
	Dacheng Tao\textsuperscript{\rm 4},
	\\
	\textsuperscript{\rm 1}School of Artificial Intelligence, Peking University. \textsuperscript{\rm 2}Huawei Noah’s Ark Lab.\\
	\textsuperscript{\rm 3}School of Computer Science, University of Sydney. \textsuperscript{\rm 4}JD Explore Academy, China. \\
	yhtang@pku.edu.cn, \{kai.han, yunhe.wang\}@huawei.com,  dacheng.tao@gmail.com.
}

\maketitle

\begin{abstract}
	This paper studies the efficiency problem for visual transformers by excavating redundant calculation in given networks. The recent transformer architecture has demonstrated its effectiveness for achieving excellent performance on a series of computer vision tasks. However, similar to that of convolutional neural networks, the huge computational cost of vision transformers is still a severe issue. Considering that the attention mechanism aggregates different patches layer-by-layer,  we present a novel patch slimming approach that discards useless patches in a top-down paradigm. We first identify the effective patches in the last layer and then use them to guide the patch selection process of previous layers. For each layer, the impact of a patch on the final output feature is approximated and patches with less impact will be  removed.  Experimental results on benchmark datasets demonstrate that the proposed method can significantly reduce the computational costs of vision transformers without affecting their performances. For example, over 45\% FLOPs of the  ViT-Ti model can be reduced with only 0.2\% top-1 accuracy drop on the ImageNet dataset.
	
\end{abstract}

\section{Introduction} 

\label{sec-intro}

Recently, transformer models have been introduced into the field of computer vision and achieved high performance in many tasks such as object recognition~\cite{dosovitskiy2020image}, image process~\cite{chen2020pre}, and video analysis~\cite{kim2018spatio}. Compared with the convolutional neural networks~(CNNs), the transformer architecture introduces less inductive biases  and hence has larger potential to absorb more training data and generalize well on  more diverse tasks~\cite{dosovitskiy2020image,touvron2020training,liu2021post,yuan2021tokens,tang2021augmented,han2022pyramidtnt}. However, similar to CNNs, vision transformers also suffer high computational cost, which blocks their deployment on resource-limited devices such as mobile phones and various IoT devices.  To apply a deep neural network in such real scenarios, massive model compression algorithms have been proposed to reduce the required computational cost~\cite{ye2018rethinking,li2016pruning}. For example, quantization algorithms approximate weights and intermediate features maps in neural networks with low-bit data~\cite{courbariaux2016binarized,rastegari2016xnor}. Knowledge distillation improves the performance of a compact network by transferring knowledge from giant models~\cite{hinton2015distilling,lan2018knowledge}.

In addition, network pruning is widely explored and used to reduce the neural architecture by directly removing useless components in the pre-defined network~\cite{han2015learning, han2015deep, liu2017learning,liu2018frequency}.  Structured pruning  discards whole contiguous components of a pre-trained model, which has attracted much attention in recent years, as it can realize acceleration without specific hardware design. In CNNs, removing a whole filter for improving the network efficiency is a representative paradigm, named channel pruning (or filter pruning)~\cite{he2017channel,liu2017learning}. For example, Liu~\etal~\cite{liu2017learning} introduce scaling factors to control the information flow in the neural network and filters with small factors will be removed.  Although the aforementioned network compression methods have made tremendous efforts for deploying compact convolutional neural networks, there are only few works discussing how to  accelerate vision transformers.

Different from the paradigm in conventional CNNs, the vision transformer splits the input image into multiple  patches and calculates the features of all these patches in parallel. The attention mechanism will further aggregate all patch embeddings into visual features as the output. Elements in the attention map reflect the relationship or similarity between any two patches, and the largest attention value for constructing the feature of an arbitrary patch is usually calculated from itself. Thus, we have to preserve this information flow in the pruned vision transformers for retaining the model performance, which cannot be guaranteed in the conventional CNN channel pruning methods. Moreover, not all the manually divided patches are informative enough and deserve to be preserved in all layers, \eg, some patches are redundant with others. Hence we consider developing a patch slimming approach that can effectively identify and remove redundant patches.

In this paper, we present a novel patch slimming algorithm for  accelerating the vision transformers. In contrast to existing works focusing on the redundancy in the network channel dimension, we aim to  explore the computational redundancy in the patches of a vision transformer~(as shown in Figure~\ref{fig-patch}. The proposed method removes redundant patches from the given transformer architecture in a top-down framework, in order to ensure the retained high-level features of discriminative patches can be well calculated. Specifically, the patch pruning will execute from the last layer to the first layer, wherein the useless patches are identified by calculating their importance scores to the final classification feature (\ie, class token). To guarantee the information flow, a patch will be preserved if the patches in the same spatial location are retained by deeper layers. For other patches, the importance scores determine whether they are preserved, and patches with lower scores will be discarded. The whole pruning scheme for vision transformers is conducted under a careful control of the network error, so that the pruned transformer network can maintain the original performance with significantly lower computational cost. Extensive experiments validate the effectiveness of the proposed method for deploying efficient vision transformers. For example, our method can reduce more than 45\% FLOPs of the ViT-Ti model with only 0.2\% top-1 accuracy loss on the ImageNet dataset.

\begin{figure}
	\centering
	\small	
	\includegraphics[width=0.7\columnwidth]{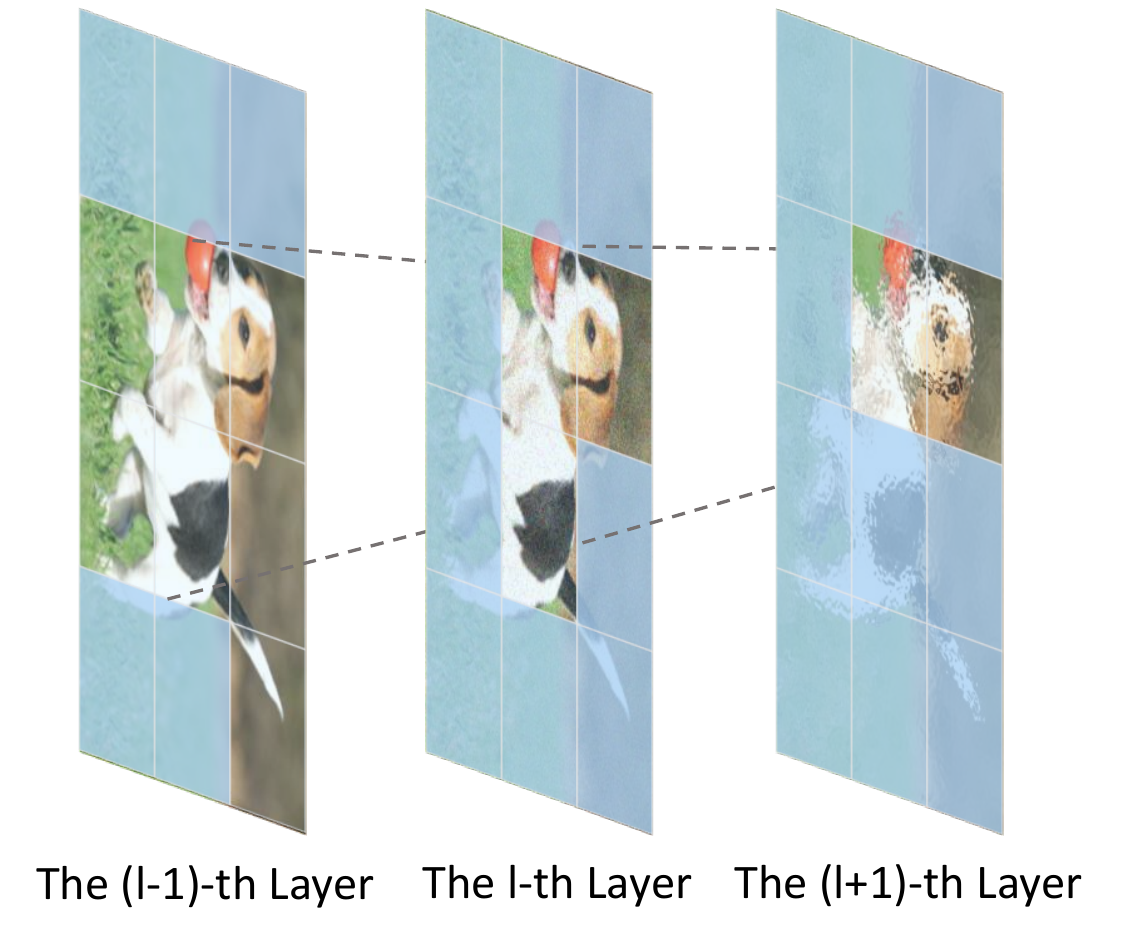}
	\caption{The diagram of  patch slimming for vision transformers.}
	\label{fig-patch}
\end{figure}
\section{Related work}

\textbf{Structure pruning for CNNs. }
Channel pruning discards the entire convolution kernels~\cite{lebedev2016fast} to accelerate the inference process and reduce the required memory cost~\cite{lebedev2016fast,DBLP:conf/iclr/SuYH00Z021,liu2017learning,he2019filter,su2021bcnet}. To identify the redundant filters, massive methods have been proposed.  Wen~\etal~\cite{wen2016learning} add a group-sparse regularization on the filters and remove filters with small norm. Beyond imposing sparsity regularization on the filters directly,  Liu~\etal~\cite{liu2017learning} introduce extra scaling factors to each channel and these scaling factors are trained to be sparse. Filters with small scaling factors has less impact on the network output and will be removed for accelerating inference. He~\etal~\cite{he2019filter} rethink the criterion that filters with small norm values are less important and propose to discard the filters having larger similarity to others. To  maximally excavate redundancy, Tang~\cite{tang2020scop} set up a scientific control to  alleviate the distribution of irrelevant factors and remove  filters with little relation to the given task. In the conventional channel pruning for CNNs, channels in different layers have no one-to-one relationship, and then the choice of effective channels in a layer has little impact on that in other channels.

\textbf{Structure pruning for transformers. }
In the transformer model for NLP tasks, a series of works focus on reducing the heads in the multi-head attention (MSA) module. For example, Michel~\etal~\cite{NEURIPS2019_2c601ad9} observes that removing a large percentages of heads in the  pre-trained BERT~\cite{devlin2018bert} models has limited impact on its performance. Voita~\etal~\cite{voita2019analyzing} analyze the role of each head in the transformer and  evaluate their contribution to the model performance. Those heads with less contributions will be reduced. Besides the MSA module, the neurons in the multilayer perceptron (MLP) module are also pruned in \cite{brown2020language}. Designed for vision transformers, VTP~\cite{zhu2021visual} reduces the number of embedding dimensions by introducing  control coefficients and removes neurons with small coefficients. Different from them, the proposed patch slimming explores the redundancy from a new perspective by considering the information integration of different patches in a vision transformer. Actually, reducing patches can be also combined with pruning in other dimensions to realize higher acceleration.

\section{Patch Slimming for Vision Transformer}
In this section, we introduce the scheme of pruning patches in vision transformers. We first review the vision transformer briefly and then introduce the formulation of patch slimming.

In vision transformer, the input image is split into $N$ patches and then fed into transformer model for representation learning. For an $L$-layer vision transformer model, the multihead self-attention~(MSA) modules and  multi-layer perceptron (MLP) modules are its main components occupying most of the computational cost. Denoting $Z_{l-1}, Z'_l \in \R^{N\times d}$  as the input and the intermediate features of the $l$-th layer, the MSA and MLP modules can be formulated as:
\begin{equation}
	\small
	\label{eq-blk}
	\begin{aligned}
	 &\MSA(Z_{l-1})\\
	 &\quad = {\rm Concat }\left[ {\rm softmax }\left( \frac{Q_l^h {K_l^h}^{\top}}{\sqrt{d}} \right)V_l^h \right]_{h=1}^H W_l^o, \\
	&\MLP (Z'_l)=  \phi (Z'_lW_l^a)W_l^b,
	\end{aligned}
\end{equation}
where $d$ is embedding dimension, $H$ is the number of heads, $Q_l^h=Z_{l-1}W^{hq}_l$, $K_l^h=Z_{l-1}W^{hk}_l$, and $V_l^h=Z_{l-1}W^{hv}_l$ are the query, key and value of the $h$-th head in the $l$-th layer, respectively. $W_l^a$, $W_l^b$ are the weights for linear transformation and $\phi(\cdot)$ is the non-linear activation function~(\eg, GeLU). Most of recent vision transformer models are constructed by stacking MSA and MLP modules alternately and  a block $\B_l (\cdot)$ is defined as $\B_l (Z_{l-1})=\MLP(\MSA(Z_{l-1})+Z_{l-1})+Z'_l$.

As discussed above, there is considerable redundant information existing in the patch level of vision transformers. To further verify this phenomenon, we calculate the average cosine similarity between patches within a layer, and show how similarity vary \wrt layers in Figure~\ref{fig-sim}. The similarity between patches increase rapidly as layers increase, and the average similarity even exceed 0.8 in deeper layers. The high similarity implies that patches are redundant especially in the deeper layers and removing them will not obviously affect the feature calculation.

Patch slimming aims to recognize and discard redundant patches for accelerating the inference process~(as shown in Figure~\ref{fig-patch}). Here we use a binary vector  $\m_l \in \{0,1\}^N$ to indicate whether a patch is preserved or not, the pruned MSA and MLP modules can be formulated as follows: 
\begin{equation}
	\small
	\label{eq-mblk}
	\begin{aligned} 
		&\widehat \MSA_l(\HZ_{l-1})
		\\&\ = {\rm Concat }\left[ \diag(\m_l) {\rm softmax }\left( \frac{{Q}_l^h {K_l^h}^{\top}}{\sqrt{d}} \right) {V}_l^h \right]_{h=1}^H W_l^o,\\ 
		&\widehat \MLP_l (\HZ'_l)= \diag(\m_l) \phi (\HZ'_lW_l^a)W_l^b,
	\end{aligned}
\end{equation} 
where $\diag(\m_l)$ is a diagonal matrix whose  diagonal line is composed of elements in $\m_l$. Specifically, $\m_{l,i}=0$ indicates that the $i$-th patch in the $l$-th layer is pruned. $\HZ_{l-1}$, $\HZ'_l$ are the input and the intermediate features of the $l$-th layer in a pruned vision transformer.
Then the pruned block is defined as $\HB_l(\HZ_{l-1})=\widehat \MLP_l(\widehat \MSA_l(\HZ_{l-1})+ \HZ_{l-1})+\HZ'_l$. 

In practical implementation, only the effective patches of input feature $\HZ_{l-1}$ are selected to calculate queries, and then all the subsequent operations are only implemented on these effective patches. 
Thus, the computation of the pruned patches can be avoided \footnote{According to $\m_l$, only effective patches from the shortcut branch are added to the output of pruned MSA, while the output of pruned MLP is padded with zeros before added to the shortcut.}.  

\textbf{Computation Efficiency.} Compared with the original block $\B_l(\cdot)$, the pruned $\HB_l(\cdot)$ can save a large amount of computational cost. Given a block $\B(\cdot)$ with $N$ patches and $d$-dimension embedding, the computational costs of MLP~(2-layers with hidden dimension $d'$) and MSA are $(2Ndd')$ and $(2N^2d+4Nd^2)$, respectively.  After pruning $\eta\%$ patches, all the computational components in MLP are pruned, and then $\eta\%$~FLOPs in the MLP module are reduced. For the MSA module, the cost of calculating query, attention map and output projection can be reduced, and then $\eta\%(2N^2d+ 2Nd^2)$ FLOPs is reduced.

\section{Excavating Redundancy via Inverse Pruning} 

In this section, we present the top-down framework to prune patches in the vision transformer, and provide an effective importance score estimation of each patch.

\begin{figure}[htp] 
	\centering
	\small
	\includegraphics[width=1.0\columnwidth]{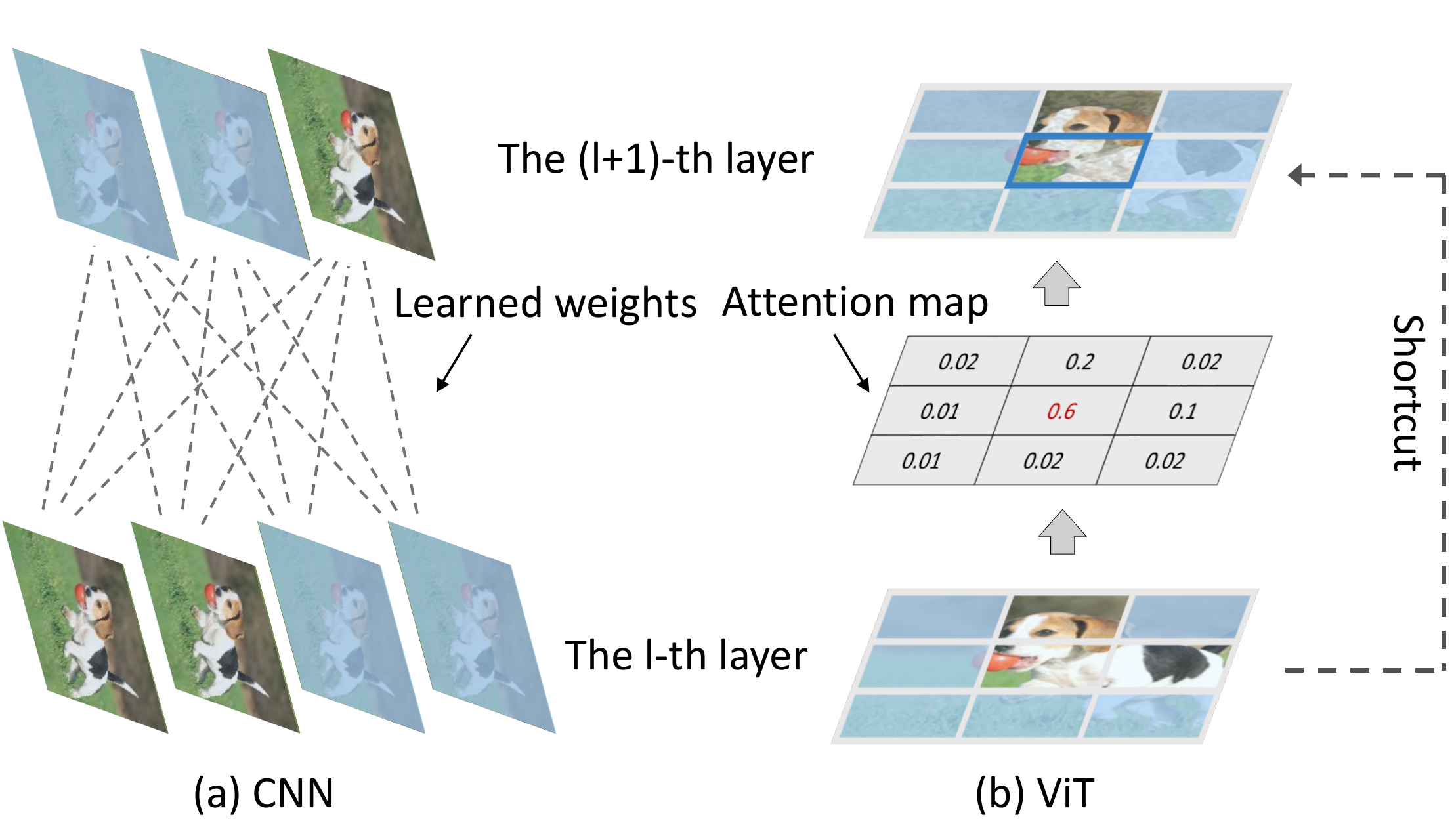}
	\caption{The comparison between channel pruning in CNNs and patch pruning in vision transformers.}
	\label{fig-cvt}
\end{figure}

\begin{figure}[htp] 
	\centering
	\small

	\includegraphics[width=0.6\columnwidth]{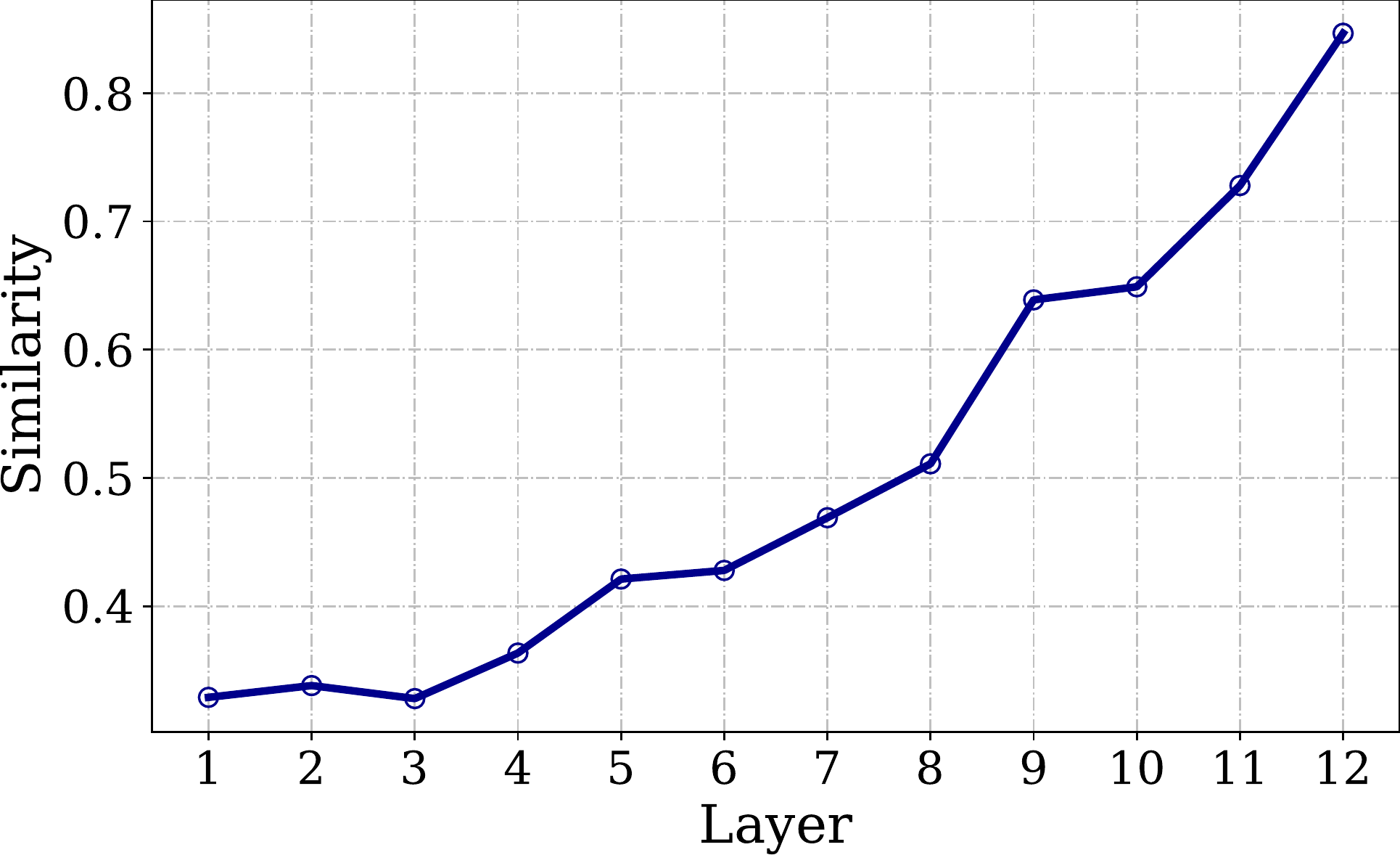}
	\caption{The average similarity of different patches varies \wrt network depth in the ViT-Base model.}
	\label{fig-sim}
\vspace{-2mm}
\end{figure}

\subsection{Top-Down Pruning}
\label{sec-proc}

For patch slimming in vision transformer, we adopt a top-down manner to prune patches layer-by-layer. It is a natural choice with two reasons as described in the following.

For a CNN model, pruning channels in different layers independently can achieves high performance ~\cite{liu2017learning,tang2020scop}. However, this paradigm cannot work well in vision transformers. The main reason is that patches in different layers of a vision transformer are one-to-one corresponding. 
Figure~\ref{fig-cvt}  compares pruning channels in CNNs and  pruning patches  in vision transformers.
As own in Figure~\ref{fig-cvt}(a), channels in adjacent layers of a CNN model are fully connected by learnable weights, and each channel contains information from the entire image. However, in the vision transformer~(Figure~\ref{fig-cvt}(b)), different patches communicate with others by an attention map, which reflects the similarity between different patches. If patch $i$ and patch $j$ are more similar, the corresponding value $A_{lh}^{ij}$ tends to have a larger value. The diagonal elements $A_{lh}^{ii}$ usually plays a dominant role, that is, a patch pays highest attention to the input at the position of itself. Besides, the shortcut connection directly copies the feature in the $l$-layer to the corresponding patches in the next layer. This one-to-one correspondence inspires us to preserve some important patches in the same spatial locations of different layers, which can guarantee the information propagation across layers.

Another characteristic of  vision transformer is that deeper layers tend to have more redundant patches. The attention mechanism in the MSA module aggregates different patches layer-by-layer, and a large number of similar patches are produced in the process (as shown in Figure~\ref{fig-sim}).  It implies that more redundant patches can be safely removed in deeper layers, and fewer in shallower layers. 

Based on the above analysis, we start the pruning procedure from the output layer, and then prune previous layers by transmitting the selected effective patches from top to down. Specially, all the patches preserved in the $(l+1)$-th layer will be also preserved in the $l$-th layer. 
Thus, this top-down pruning procedure can guarantee that shallow layers maintain more patches than the deep layers, which is consistent with the redundancy characteristic of vision transformer. 

\subsection{Impact Estimation}
\label{sec-impact}
With the patch pruning scheme described in the above section, all that's left is to recognize redundant patches in a vision transformer, \ie, find the optimal mask $\m_l$ in each layer. Our goal is to prune patches as many as possible to realize maximal acceleration, while maintaining the representation ability of the output feature. Actually, only a part of patch embeddings in the last layer are used to predict the labels of input images for a specific task. For example, in the image classification task, only a patch related to classification (\ie, class token) is sent to the classifier for predicting labels. Other patches in the output layer can be removed safely without affecting network output. Supposing the first patch is the class token, we can get the mask in the last layer, \ie, $\m_{L,1}=1$, and $\m_{L,i}=0, \forall~i=2,3,\cdots,N$. Then for the other layers, the optimization object is formulated as follows:
\begin{equation}
	\begin{aligned}
		\label{eq-obj} 
		&\min_{\m_1, \m_2, \cdots, \m_{L-1}} \sum_{l=1}^{L-1}\|\m_l\|_0,\\ 
		&\quad \quad\st \ \m_l\in\{0,1\}^N , \\ 
		&\quad \quad\E_L=\norm{\diag(\m_L)\left(\HZ_L-Z_L\right)}_F^2\le \epsilon,
	\end{aligned}
\end{equation}
where  $\|\cdot\|_0$ is the $\ell_0$-norm of a vector, \ie, the number of non-zero elements. $\|\cdot \|_F$ is the Frobenius norm of a matrix. $\epsilon$ is the tolerable error. $\HZ_L$ and $Z_L$ are output features of the pruned and unpruned transformers.   Eq.~\ref{eq-obj} is hard to optimize directly, as it involve $\ell_0$ optimization under constraint, which is non-convex, NP hard and requires combinatorial search~\cite{liu2017learning}.  To solve Eq.~\ref{eq-obj}, we firstly define a signification score by approximating the impact of a patch on the reconstruction error $\E_L$  and then develop a pruning procedure.  

The attention mechanism aggregates  information from different patches to one patch, which is the main cause to produce redundant patch features. To focus on the attention layer for excavating redundant patches, we reformulate the definition of a block $\B(\cdot)$ in a simple formulation. Denoting $P_l^h= {\rm softmax }\left( {Q^{h}_l{K^{h}_l}^{\top}}/{\sqrt{d}}\right)$, the 
MSA module in Eq.~\ref{eq-blk} can be formulated as:
\begin{equation}
	\small
	\begin{aligned}
	\MSA(Z_l)& = {\rm Concat }\left [P_l^h V_l^h\right]_{h=1}^H W^{o}_l \\
	&= \sum_{h=1}^H P_l^h V_l^h W^{ho}_l
	 = \sum_{h=1}^H P_l^h Z_{l-1} W_l^{hv} W^{ho}_l,
	\end{aligned}
\end{equation} 
where $W^{o}_l=[W^{1o}_l;W^{2o}_l; \cdots; W^{Ho}_l]$, and $W^{ho}_l\in\mathbb{R}^{\frac{d}{H}\times d}$.
Then the original block $\B_l(Z_{l-1})$ and pruned block $\HB_l(Z_{l-1})$ can be represented as:
\begin{equation}
	\small
	\label{eq-sblk}
	\begin{aligned}
	&\B_l(Z_{l-1})=\O(\sum_{h=1}^HP_l^hZ_{l-1},\{W_l\}), \\ &\HB_l(Z_{l-1},\m_l)=\O(\sum_{h=1}^H\diag(\m_l)P_l^hZ_{l-1},\{W_l\})
	\end{aligned}
\end{equation}
where $\O(\cdot,W_l)$  is composed of multiple linear projection matrices $\{W_l\}$ in the MSA and MLP module, as well as non-linear activation functions (\eg, GeLU). 
 
Based on the simplified formulation of a block~(Eq.~\ref{eq-sblk}), we here explore how a patch in the $t$-th layer affects the error $\E_L$~(Eq~\ref{eq-obj}) of effective patches in the last layer. We reverse the transformer and prune it from the last to the first layer sequentially. Thus when it comes to the $t$-th layer, all the deeper layers have been pruned. To approximate the significance of each token, we have the following theorem.

\begin{theorem}
	The impact of the $t$-th layer's patch on the final error $\E_L$ can be reflected by a significance metric $\s_t\in\R^N$. For the $i$-th patch in the $t$-th layer, we have 
\begin{equation}
	\label{eq-imp}
	\s_{t,i}=\sum_{h\in [H]^{L\sim t+1}}  \norm{A_t^h[:,i] U_t^h[i,:]}_F^2,
\end{equation} 
where $A_t^h=\prod_{l=t+1}^L\diag(\m_l)P_l^h $ and $U_t^h=P_t^h \abs{Z_{t-1}}$.  $A_t^h[:,i]$ denotes the $i$-th column of $A_t^h$  and $U^h_t[i,:]$ is $i$-th row of $U_t^h$. $[H]^{L\sim t+1}$ denotes all the attention heads in the $(t+1)$-th to $L$-th layer.
\end{theorem}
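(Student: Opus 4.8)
The plan is to treat pruning a single patch as a perturbation of the feature at layer $t$ and then track, to first order, how that perturbation propagates through the already-pruned layers $t+1,\dots,L$ into the masked output error $\E_L$. Using the simplified block representation $\HB_l(Z_{l-1},\m_l)=\O(\sum_h \diag(\m_l)P_l^h Z_{l-1},\{W_l\})$, I would write $\HZ_L - Z_L$ as a sum of contributions, one per pruned patch, and isolate the contribution of setting $\m_{t,i}=0$. Since the objective measures $\norm{\diag(\m_L)(\HZ_L-Z_L)}_F^2$, the goal is to produce a per-patch score whose value approximates this output error, so that greedily discarding low-score patches keeps $\E_L$ small.

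The first key step is the forward propagation operator. Fixing the attention matrices $P_l^h$ (i.e.\ ignoring their second-order dependence on the mask) and linearizing the activation inside $\O$, a perturbation $\delta Z_t$ at the output of layer $t$ is transported through layer $l$ in the patch dimension by the linear map $\diag(\m_l)P_l^h$: the attention $P_l^h$ mixes patches, while $\diag(\m_l)$ zeros the rows pruned at that layer. Composing these maps from $t+1$ up to $L$ yields exactly $A_t^h=\prod_{l=t+1}^L \diag(\m_l)P_l^h$, and because the last factor already carries $\diag(\m_L)$, the mask appearing in $\E_L$ is automatically absorbed. Thus the share of $\HZ_L-Z_L$ coming from a row-$i$ perturbation at layer $t$ concentrates in the column $A_t^h[:,i]$.

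The second key step is identifying the source perturbation. Setting $\m_{t,i}=0$ removes the $i$-th row of $\sum_h P_t^h Z_{t-1}$, so the induced $\delta Z_t$ is supported on row $i$ with value $\sum_h P_t^h[i,:]Z_{t-1}$. Because $P_t^h$ is a softmax output with nonnegative entries, a rowwise triangle inequality gives the bound $\abs{P_t^h[i,:]Z_{t-1}}\le P_t^h[i,:]\abs{Z_{t-1}}=U_t^h[i,:]$, which is where the absolute value $\abs{Z_{t-1}}$ in the statement enters. Combining the column $A_t^h[:,i]$ (spatial propagation) with the row $U_t^h[i,:]$ (source magnitude) as an outer product, taking the squared Frobenius norm, and summing over the heads indexed by $[H]^{L\sim t+1}$ then reproduces $\s_{t,i}=\sum_{h} \norm{A_t^h[:,i]\,U_t^h[i,:]}_F^2$.

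The hard part is controlling the two nonlinear features that the clean outer-product form glosses over: the activations and channel-mixing weights inside $\O$, and the dependence of each $P_l^h$ on the (pruned) features. I would handle the former by treating the activations as locally linear and the channel-dimension maps $W_l^{hv}W_l^{ho}$, $W_l^a$, $W_l^b$ as contributing bounded multiplicative gains that are common across patches, hence irrelevant to the \emph{relative} ranking of $\s_{t,i}$; this is also why they do not appear explicitly in Eq.~\ref{eq-imp}. The latter --- freezing the attention weights and neglecting how pruning reshapes $P_l^h$ --- together with the triangle-inequality step turns the exact identity into an upper bound on a patch's impact, which is precisely what is needed: the score over-estimates rather than under-estimates each patch's contribution, so patches deemed insignificant are safe to remove.
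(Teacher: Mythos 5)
Your proposal follows essentially the same route as the paper's proof: the paper likewise propagates the per-patch perturbation through the composed masked attention maps $\prod_{l=t+1}^{L}\diag(\m_l)P_l^h$ to obtain $A_t^h$, bounds the source term via the nonnegativity of the softmax rows to get $U_t^h=P_t^h\abs{Z_{t-1}}$, and then expands $(I_N-\diag(\m_t))$ patch-by-patch to read off $\s_{t,i}$. The only cosmetic difference is that where you speak of locally linearizing the activations, the paper invokes Lipschitz continuity of $\O(\cdot)$ to absorb the nonlinearities and channel-mixing weights into patch-independent constants $C'_t$, which is the rigorous version of your "bounded multiplicative gains common across patches" remark.
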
 

\begin{proof}
	We use $\HF_{l\sim t}(Z_{t-1},\{\m_t\}_t^L)~(l>t)$ to denote  feature of the $l$-th layer in a vision transformer, whose layers behind $t$-th layer have been pruned, while the previous layers has not pruned yet, \ie, $\HF_{L\sim t}(Z_{t-1},\{\m_t\}_t^L) = \hat \B_L \circ \hat \B_{L-1} \circ \cdots \circ \hat \B_{t}(Z_{t-1})$.
	When pruning the patch in the $t$-th layer, we compare effective patches of the last layer from two transformers to decide whether the $t$-th layer has been pruned. Then the error $\E_L$ is calculated as:
	\begin{equation}
		\small
		\begin{aligned}
		\E_L &=||\diag(\m_L)[\HF_{L\sim (t+1)}(\HB_t(Z_{t-1})) \\
		&-  \HF_{L\sim (t+1)}(\B_t(Z_{t-1}))]||_F^2.
		\end{aligned}
	\end{equation} 	
	The error $\E_L$ in the last layer can be represented by the patches in the $(L-1)$-th layer, \ie,
	\begin{equation}
		\small
		\label{eq-el1}
		\begin{aligned}
			\E_L& =  ||\diag(\m_L) P_{L}^h   [\O(\sum_{h=1}^H    \HF_{(L-1)\sim (t+1)}(\HB_t(Z_{t-1})))\\
			 &-\O( \sum_{h=1}^H  \HF_{(L-1)\sim (t+1)}(\B_t(Z_{t-1})) )  ] ||_F^2 \\
			&\le C_L  || \sum_{h=1}^H \diag(\m_L) P_{L}^h |\HF_{(L-1)\sim (t+1)}(\HB_t(Z_{t-1}))\\ 
			&- \HF_{(L-1)\sim (t+1)}(\B_t(Z_{t-1}))|||_F^2, 
		\end{aligned}
	\end{equation}
	where $|\cdot|$ is the element-wisely absolute value. The inequality above comes the Lipschitz continuity~\cite{funahashi1993approximation, dupuis1991lipschitz} of function $\O(\cdot)$ and $C_L$ is the Lipschitz constant. Recalling that $\O(\cdot)$ is compose of multiple linear projections and non-linear activation function, the condition of Lipschitz continuity is satisfied~\cite{funahashi1993approximation}. $E_L$ can be further transmitted to previous layers, and for the $t$-th layer we have 
	\begin{align}
		\small
		\E_L & \le  \prod_{l=t+1}^L C_l  || \sum_{h\in [H]^{L\sim t}}  \prod_{l=t+1}^L\diag(\m_l)P_l^h \\
		& \quad |\HB_t(Z_{t-1})-\B_t(Z_{t-1})| ||_F^2 \\
		& \le  \prod_{l=t}^L C_l  || \sum_{h\in [H]^{L\sim t+1}}  \prod_{l=t+1}^L\diag(\m_l) \\
		& \quad P_l^h \left (I_N-\diag(\m_l)\right)P_t^h \abs{Z_{t-1}} ||_F^2 \\
		&=C'_t  ||\sum_{h\in [H]^{L\sim t+1}}A_t^h (I_N -\diag(\m_t)) U_t^h ||_F^2, \label{eq-ela}	
	\end{align}
	where $A_t^h=\prod_{l=t+1}^L\diag(\m_l)P_l^h \in \R^{N\times N}$,  $U_t^h=P_t^h \abs{Z_{t-1}} \in \R^{N\times d}$, and $C'_t=\prod_{l=t}^L C_l$. $[H]^{l\sim t}$ denotes all the attention heads in the $t$-th to $l$-th layer. To investigate how each patch in the $t$-th layer affect the final error $E_L$, we expand Eq.~\ref{eq-ela} \wrt each element in the indicator $\m_l$. Denoting $\m_{l,i}$ as the $i$-th element in $\m_l$, $A_t^h[:,i]$ is the $i$-th column of $A_t^h$  and $U^t_t[i,:]$ is $i$-th row of $U_t^h$, Eq.~\ref{eq-ela} can be written as:
	\begin{align}
		\small
		\E_L & \le C'_t  ||\sum_{h\in [H]^{L\sim t+1}} \sum_{i=1}^N A_t^h[:,i] (1-\m_{t,i}) U_t^h[i,:] ||_F^2 \\
		& \le C'_t \sum_{i=1}^N(1-\m_{t,i})\sum_{h\in [H]^{L\sim t+1}}  \norm{A_t^h[:,i] U_t^h[i,:]}_F^2 \label{eq-ele}.
	\end{align}
	Then we get the importance of each patch, \ie, $	\s_{t,i}=\sum_{h\in [H]^{L\sim t+1}}  \norm{A_t^h[:,i] U_t^h[i,:]}_F^2$.
\end{proof}
\begin{table*}[t] 
	\centering
	\small 
	\caption{Comparison of the pruned vision transformers with different methods on ImageNet. `FLOPs~$\downarrow$' denotes the reduction ratio of FLOPs.} 
	\vspace{-2mm}
	\begin{tabular}{c|c|c|c|c|c|c|c}
		\toprule[1.5pt]
		
		\multirow{2}{*}{Model} &   \multirow{2}{*}{Method}  & {Top-1 } &Top-5   &  FLOPs  & FLOPs &Throughput& Throughput \\ 
		&&Acc. (\%)&Acc. (\%)&(G)&$\downarrow$ (\%)&(image / s)& $\uparrow$ (\%)\\
		\hline
		\multirow{6}{*}{ViT~(DeiT)-Ti}  
		& Baseline& 72.2&91.1&  1.3 & 0& 2536&0\\
		&SCOP~\cite{tang2020scop} &68.9 (-3.3)&89.0 (-2.1)&0.8&38.4&3372&33.0 \\
		&PoWER~\cite{goyal2020power}&69.4 (-2.8)&89.2 (-1.9)&0.8&38.4&3304&30.3\\
		&HVT~\cite{pan2021scalable} & 69.7 (-2.5) & 89.4 (-1.7)  & 0.7 & 46.2&3524&38.9\\
		& PS-ViT (Ours) &\textbf{72.0} (-0.2)&\textbf{91.0} (-0.1)&0.7&46.2&3576&41.0\\
		& DPS-ViT (Ours) &\textbf{72.1} (-0.1)&\textbf{91.1} (-0.0) &0.6&53.8&3639&43.5 \\ 
		
		\hline 
		\multirow{6}{*}{ViT~(DeiT)-S}  
		& Baseline& 79.8&95.0&  4.6 & 0&940&0\\
		&SCOP~\cite{tang2020scop} &77.5 (-2.3)&93.5 (-1.5)&2.6&43.6&1310&39.4 \\
		&PoWER~\cite{goyal2020power} &78.3 (-1.5)&94.0 (-1.0)& 2.7 & 41.3&1295&37.8\\
		&HVT~\cite{pan2021scalable} & 78.0 (-1.8) & 93.8 (-1.2)  &  2.4 & 47.8&1335&42.1 \\
		&PS-ViT (Ours) & \textbf{79.4} (-0.4) & \textbf{94.7} (-0.3)  &  2.6 & 43.6&1321&40.5 \\	
		&DPS-ViT (Ours) & \textbf{79.5} (-0.3) & \textbf{94.8} (-0.2)  &  2.4 & 47.8&1342&42.8 \\	
		\hline 
		\multirow{6}{*}{ViT~(DeiT)-B}  
		& Baseline&81.8 &95.6 &  17.6&0&292&0\\
		&SCOP~\cite{tang2020scop} &79.7 (-2.1)&94.5 (-1.1)&10.2& 42.0&403&38.1\\
		&PoWER~\cite{goyal2020power}&80.1 (-1.7)&94.6 (-1.0)&10.4&39.2&397&35.8\\	
		&VTP~\cite{zhu2021visual}&80.7  (-1.1)&95.0 (-0.6)&10.0& 43.2&412&41.0\\ 
		&PS-ViT (Ours)  & \textbf{81.5} (-0.3) & \textbf{95.4} (-0.2)& 9.8 &44.3&414&41.8\\ 
		&DPS-ViT (Ours)  & \textbf{81.6} (-0.2) & \textbf{95.4} (-0.2)& 9.4 &46.6&413&41.3\\ 	
		\hline 
		\multirow{4}{*}{T2T-ViT-14}  
		& Baseline&81.5 &95.4& 5.2 &0&764&0\\ 
		&PoWER~\cite{goyal2020power} &79.9 (-1.6)&94.4 (-1.0)&3.5&32.7&991&29.7\\
		&PS-T2T (Ours) &\textbf{81.1} (-0.4)&\textbf{95.2} (-0.2)&3.1&40.4&1055&38.1\\ 
		&DPS-T2T (Ours) &\textbf{81.3} (-0.2)&\textbf{95.3} (-0.1)&3.1&45.4&1078&41.1\\  
		
		\bottomrule[1.5pt]	
	\end{tabular}
	\vspace{-4mm}
	\label{tab-img}
\end{table*}
For the $i$-th patch in the $t$-th layer,  $\s_{t,i}$ reflects its impact on the effective output of the final layer.
A larger  $\s_{t,i}$ implies the corresponding patch has larger impact to the final error, which can reflect the importance of a patch to the model performance. The calculation of $\s_{t,i}$ involves all the attention maps in behind layers and the input feature of the current layer. Before pruning the current layer, we randomly sample a subset of training dataset to calculate the significance scores $\s_t$ and the average $\s_t$ over these data is adopted. The obtained $\s_t$ can be viewed as the real-number score for binary $\m_t$.

\begin{algorithm}[h]
	\caption{Patch Slimming for Vision Transformers.}
	\label{alg}
	\begin{algorithmic}[1] 
		\REQUIRE{Training dataset $\D$, vision transformer $\T$ with L layers, patch masks $\{\m_l\}_{l=1}^L$, tolerant value $\epsilon$, preserved patch's number $r$ and search granularity $r'$. }
		\STATE Initialize $\m_{L,0}$ as 1 and other elements as 0.
		\FOR{ $l=L-1, \cdots, 1$ }
		\STATE Randomly sample a subset of training data to get the significance score $\s_l$ in the $l$-th layer;
		\STATE Set $\m_l=\m_{l+1}$, $\E_l=+\infty$, $r=0$; 
		\WHILE{$\E_{l+1} > \epsilon$ }		
		\STATE Set  $r$ elements in $\m_{l,i}$ to 1 according to positions of the largest $r$ scores $\s_{l,i}$. 
		\STATE Fine-tune $l$-th layer $\B_l(Z_{l-1})$ for a few epochs.
		\STATE Calculate error $\E_{l+1}$ in the $(l+1)$-th layer. 
		\STATE $ r=r+ r'$.
		\ENDWHILE
		\ENDFOR
		\ENSURE{The pruned vision transformer.}
	\end{algorithmic}

\end{algorithm}
\vspace{-3mm}

\subsection{Pruning Procedure}
Here we conclude the overall pipeline of the proposed patch slimming method. 

We start from the output layer and prune the previous layers layer-by-layer from top to down. Specially, all the patches preserved in the $(l+1)$-th layer will be also preserved in the $l$-th layer.  The other patches are greedily selected according to their impact scores $\s_{l,i}$, wherein patches with larger scores are preserved preferentially. Considering the reconstruction error $\E_{l+1}$ in the $(l+1)$-th layer is directly affected by the patch selection in the $l$-th layer, we use it to determine whether the $l$-th layer has already enough patches. In practice, we iteratively select $r'$ important patches in each step and continue the selection process in the current layer until $\E_{l+1}$ is less than the given tolerate value $\epsilon$. To make $\E_{l+1}$ well maintain the representation ability of current preserved patches, we fine-tune the current block $\HB_l$ for a few epochs after each step of patch selection. Taking the original feature $Z_{l-1}$ in the $(l-1)$-th layer as input, and the reconstruction error $\E_{l+1}$ as the objective, the parameters in the current block $\HB_l$ are optimized. Note that the block $\HB_l$ is a very small model with only one MSA and one MLP modules, the fine-tune process is very fast.  After pruning, the mask $\m_l$ is fixed, and weight parameters in the  vision transformer is further fine-tuned to be compatible with the efficient architecture.  The procedure of patching slimming for vision transformer is summarized in Algorithm~\ref{alg}.

\subsection{A Dynamic Variant}

In the above procedure,  whether a patch will be preserved is determined by the statistics over the training dataset.  It exploits the commonalities of redundant filters from different input adequately.  Besides, dynamic pruning  is the improved version of static pruning methods, which selects different patches for each input image. The dynamic strategy has been widely  explored for reducing channels of CNN models~\cite{gao2018dynamic,hua2018channel, tang2021manifold}. Similarly, the proposed patch slimming paradigm can be easily extended to the dynamic variant~( dubbed as DPS-ViT), and here we present a simple implementation.  Following \cite{gao2018dynamic}, we insert a small module $\G$ in each block to predict which patch is effective. The module $\G$ composes of a downsampling layer, linear layer and activation functions, which takes the input feature $Z_{l-1}\in\R^{N\times d}$ as input and outputs the approximate significance score  $\hat \s_l \in \R^{N}$. Recalling that the score of a patch $\s_l$~(Eq.~\ref{eq-imp}) actually depends on the input images, the module $\G$ is trained to fit $\s_l$ calculated for each input instance in the training phase. At inference, only patches with large score  $\hat \s_{l,i}$ are required to calculate. The dynamic strategy finds redundant patches of vision transformers depending on input data, which can  excavate patch redundancy more adequately.

\vspace{-1mm}
\section {Experiments}
\label{sec-exp}
\vspace{-1mm}
In  this section, we empirically investigate the effectiveness of the proposed patch slimming methods for efficient vision transformers~(PS-ViT). We evaluate our method on the benchmark ImageNet~(ILSVRC2012)~\cite{imagenet} dataset, which contains 1000-class natural images, including 1.2M training images and 5k validation images.  The proposed method is compared with SOTA pruning methods and we also conduct extensive ablation studies to better understand our method.
   
\subsection{Experiments on ImageNet}

We conduct experiments on the standard ViT models~\cite{dosovitskiy2020image} (DeiT~\cite{touvron2020training}), an improved variant network T2T-ViT~\cite{yuan2021tokens} and the state-of-the art LV-ViT~\cite{jiang2021all}. 

\textbf{Implementation details.} For a fair comparison, we follow the training and testing settings in the original papers~\cite{touvron2020training, yuan2021tokens, yuan2021tokens}, and the patch slimming is implemented based on the official pre-trained models. The global tolerant error is select from \{0.01, 0.02\} to get models with different acceleration rates, and the search granularity $r$ is set to 10. We fine-tune the current block for 3 epochs after each iteration of patch selection. After determining the proper patches in each layer, the pruned transformers are fine-tuned following the training strategy in \cite{touvron2020training}. All the experiments are conducted with PyTorch~\cite{paszke2017automatic} and MindSpore~\cite{mindspore} on NVIDIA V100 GPUs.

\textbf{Competing methods.} We compare our patch slimming with several representative model pruning methods including CNN channel pruning methods~\cite{tang2020scop} and BERT pruning methods~\cite{goyal2020power}. SCOP~\cite{tang2020scop} is a SOTA network pruning method for reducing the channels of CNNs, and we re-implement it to reduce the patches in vision transformers. 
PoWER~\cite{goyal2020power} accelerates BERT inference by progressively eliminating word-vector. HVT~\cite{pan2021scalable} directly designs efficient vision transformer architectures by  progressively reducing the spatial dimensions through pooling operations.  

\begin{table}
	\centering
	\small 
	\caption{Comparisons with SOTA transformer models on ImageNet.} 
	\vspace{-2mm}
	\label{tab-sota}
	\begin{tabular}{l | c | c}
		\toprule[1.5pt] 		
		Model  & FLOPs (G)    & Top-1 Acc. (\%) \\ \hline  
		DeiT-S~\cite{touvron2020training}     & 4.6  & 79.8 \\
		DeiT-B~\cite{touvron2020training}      & 17.5 &  81.8 \\ \hline
		PVT-Small~\cite{wang2021pyramid}      & 3.8  & 79.8 \\
		PVT-Medium~\cite{wang2021pyramid}     & 6.7 & 81.2 \\
		PVT-Large~\cite{wang2021pyramid}      & 9.8  & 81.7 \\ \hline
		T2T-ViT-14~\cite{yuan2021tokens} &5.2&81.5\\
		T2T-ViT-19~\cite{yuan2021tokens}  &8.9&81.9\\
		T2T-ViT-24~\cite{yuan2021tokens} &14.1&82.3\\ \hline
		TNT-S~\cite{han2021transformer}  & 5.2  & 81.5 \\
		TNT-B~\cite{han2021transformer} & 14.1  & 82.9 \\ \hline
		Swin-T~\cite{liu2021Swin}         & 4.5  & 81.3 \\
		Swin-S~\cite{liu2021Swin}          & 8.7  & 83.0 \\
		Swin-B~\cite{liu2021Swin}           & 15.4  &83.5 \\ \hline
		LV-ViT-S~\cite{jiang2021all}           & 6.6  & 83.3 \\
		LV-ViT-M~\cite{jiang2021all}           & 16.0 &84.1 \\ \hline
		PS-LV-ViT-S~(Ours)          &  4.7 & 82.4\\
		DPS-LV-ViT-S~(Ours)           & 4.5  & 82.9 \\	
		PS-LV-ViT-M~(Ours)          & 8.6 & 83.5\\	
		DPS-LV-ViT-M~(Ours)           & 8.3 & 83.7\\			
		\bottomrule[1.5pt]
	\end{tabular}
	\vspace{-4mm}
\end{table}

\textbf{Experimental results.} The experimental results are shown in Table~\ref{tab-img}, where `PS-' and `DPS-' denote the proposed patch pruning method and its dynamic variant, respectively. We evaluate on three versions of DeiT~\cite{touvron2020training} with different model sizes, \ie, DeiT-Ti, DeiT-S, and DeiT-B. Our method achieve obviously higher performance compared to the existing methods. The SCOP method~\cite{tang2020scop} designed for CNNs achieve poor performance when applied for reducing patches in a vision transformer, implying simply migrating the channel pruning methods cannot work well. PoWER~\cite{goyal2020power} has a larger accuracy drop than our method, indicating the model compression method for NLP models is not optimal for CV models.  Compared to the vision transformer structure pruning method VTP~\cite{zhu2021visual}, our method investigates a new prospective by pruning patches and achieve higher accuracy with similar FLOPs. 

As for T2T-ViT model, our method can reduce the FLOPs by 40.4\% and only have a small accuracy decrease (0.4\%), which is much better than the compared PoWER method. This indicates that the patch-level redundancy exists in various vision transformer models and our method can well excavate the redundancy. 

We further conduct experiments on a SOTA transformer model, LV-ViT~\cite{jiang2021all}, and show the results in Table~\ref{tab-sota}. The results show that our patch pruning method also work well on LV-ViT, \eg, the dynamic patch slimming reduces the FLOPs of LV-ViT-M from 16.0G to 8.3G, still achieving 83.7\% top-1 accuracy. Its performance is also superior to other SOTA models such as Swin transformer~\cite{liu2021Swin}.

\subsection{Ablation Study}
We conduct extensive ablation studies on ImageNet  to verify the effectiveness of each component in our method. The DeiT-S model on the ImageNet dataset is used as the base model.

\begin{table}[t] 
	\centering
	\small 
	\caption{Learned patch pruning \vs~uniform pruning.} 
	\vspace{-2mm}
	\begin{tabular}{c|c|c|c}
		\toprule[1.5pt]
		
		\multirow{2}{*}{Method}  & {Top-1} &Top-5  &  FLOPs  \\
		  & {Acc. (\%)} &Acc. (\%)  & (G) \\ 	
		\hline 
		Baseline& 79.8&95.0&  4.6 \\
		Uniform pruning & 77.2 & 93.8  &  2.6 \\ 
		Ours & 79.4 &  94.7 &  2.6  \\ 		
		\bottomrule[1.5pt]	
	\end{tabular}
	\label{tab-even}
	\vspace{-4mm}
\end{table}

\begin{figure*}[t]
	\centering
	\small
	\begin{minipage}[t]{0.3\textwidth}
		\centering
		\includegraphics[width=0.9\linewidth]{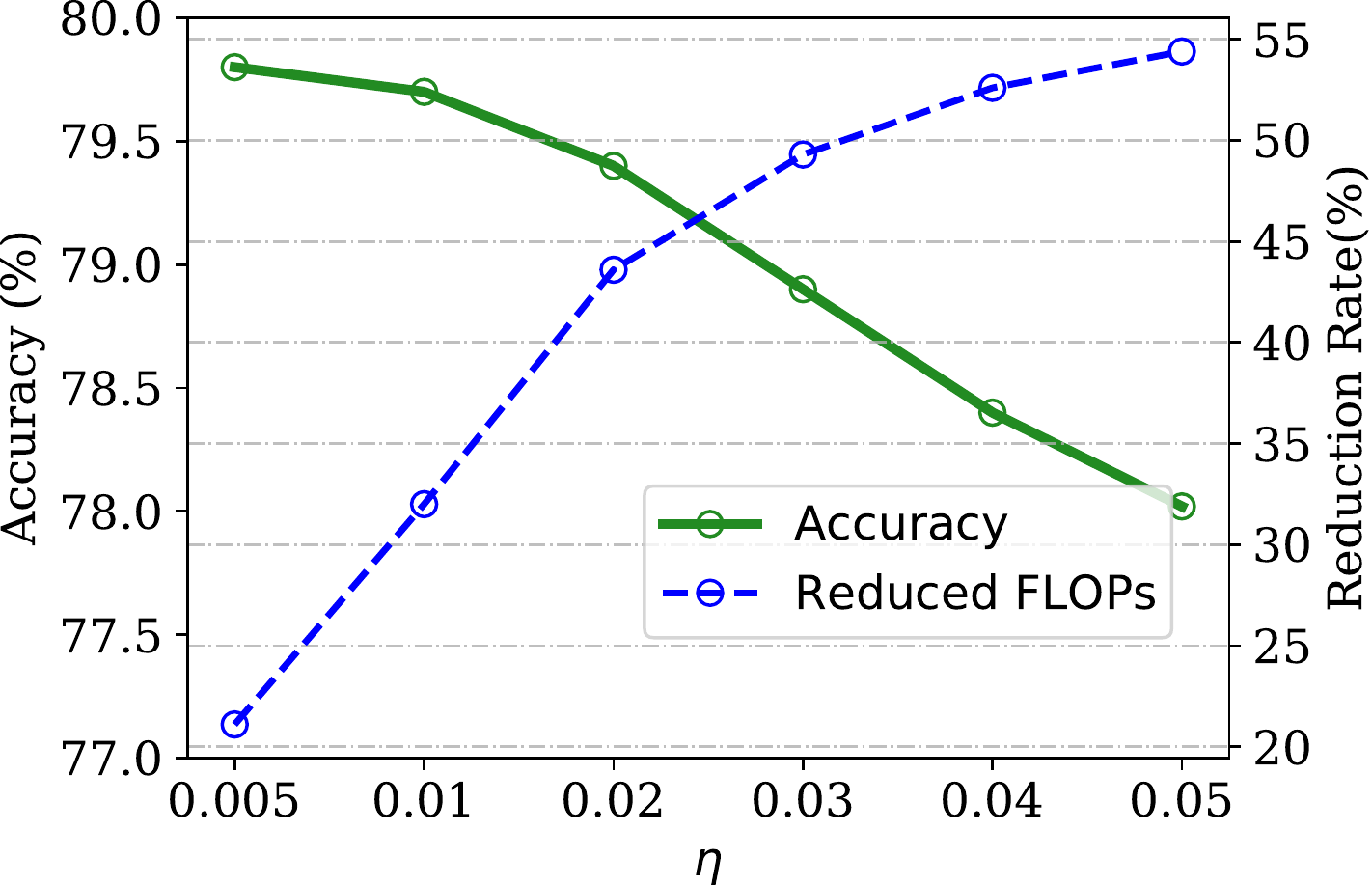}
		\caption{The ImageNet accuracy and FLOPs of the pruned DeiT-S \wrt tolerant error $\epsilon$.}
		\label{fig-eps} 
	\end{minipage}
	\hspace{0.5em}
	\begin{minipage}[t]{0.3\textwidth}
		\centering
		\includegraphics[width=0.9\linewidth]{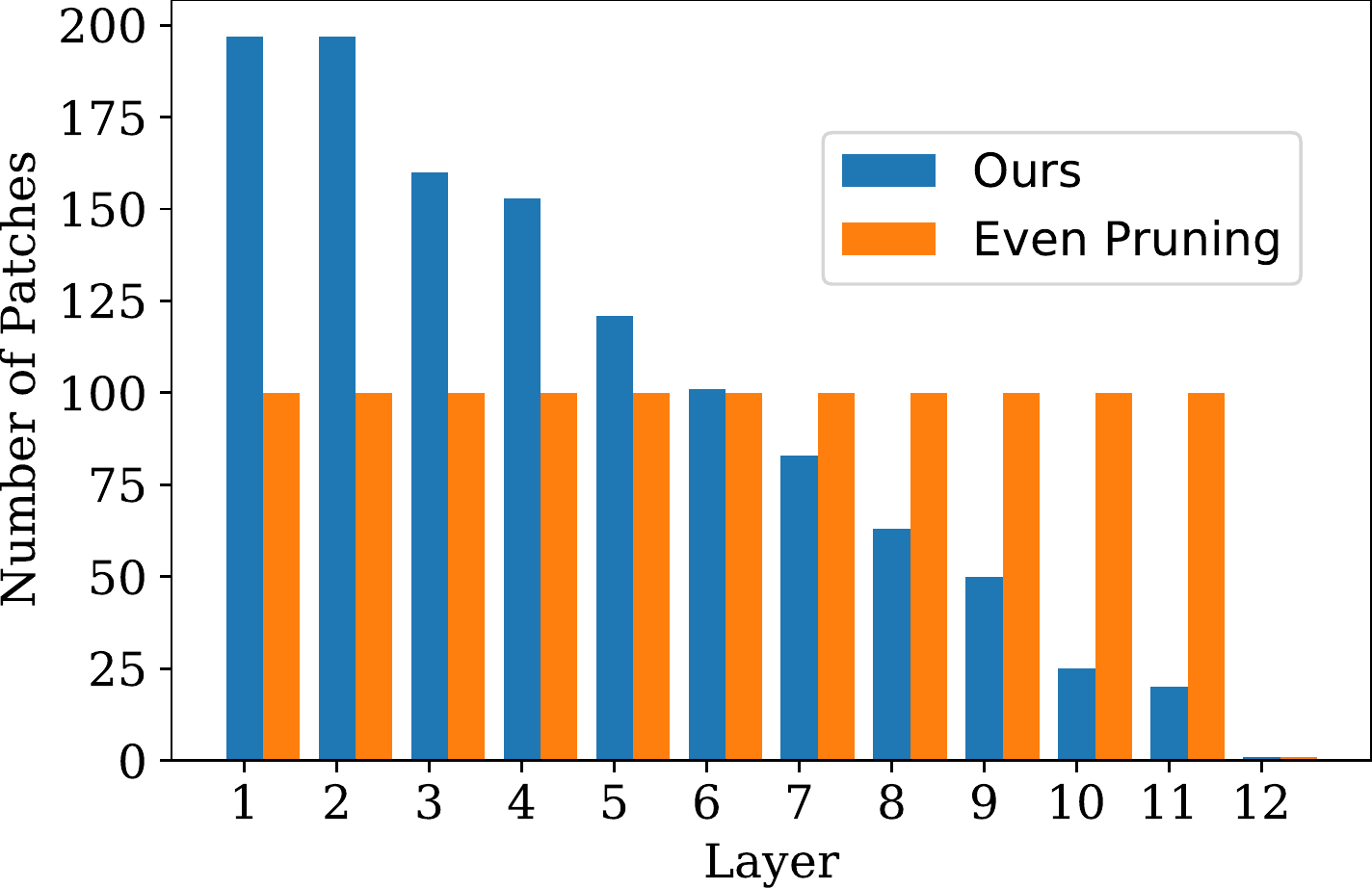}
		\caption{The architecture of pruned DeiT-S on ImageNet.}
		\label{fig-even} 
	\end{minipage}
	\hspace{0.5em}
	\begin{minipage}[t]{0.3\textwidth}
		\centering
		\includegraphics[width=0.9\columnwidth]{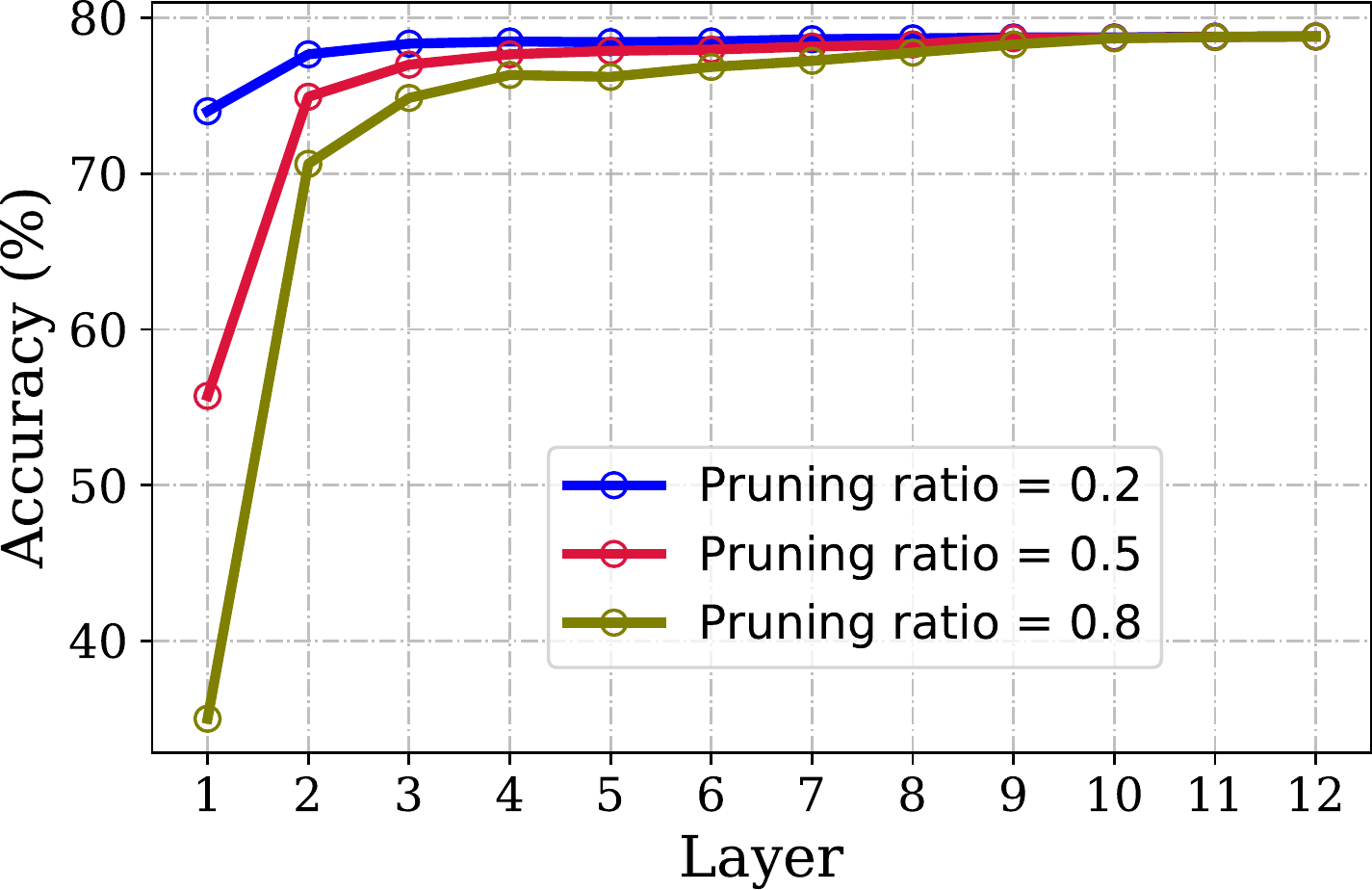} %
		\caption{Pruning different layers of the DeiT-S model on ImageNet.}
		\label{fig-layers} 
		\vspace{-5mm}
	\end{minipage}
	\vspace{-2mm}
\end{figure*}

\begin{figure*}[h]
	\centering
	\small
	\begin{subfigure}{0.3\linewidth}
		\centering
		\includegraphics[width=0.9\linewidth]{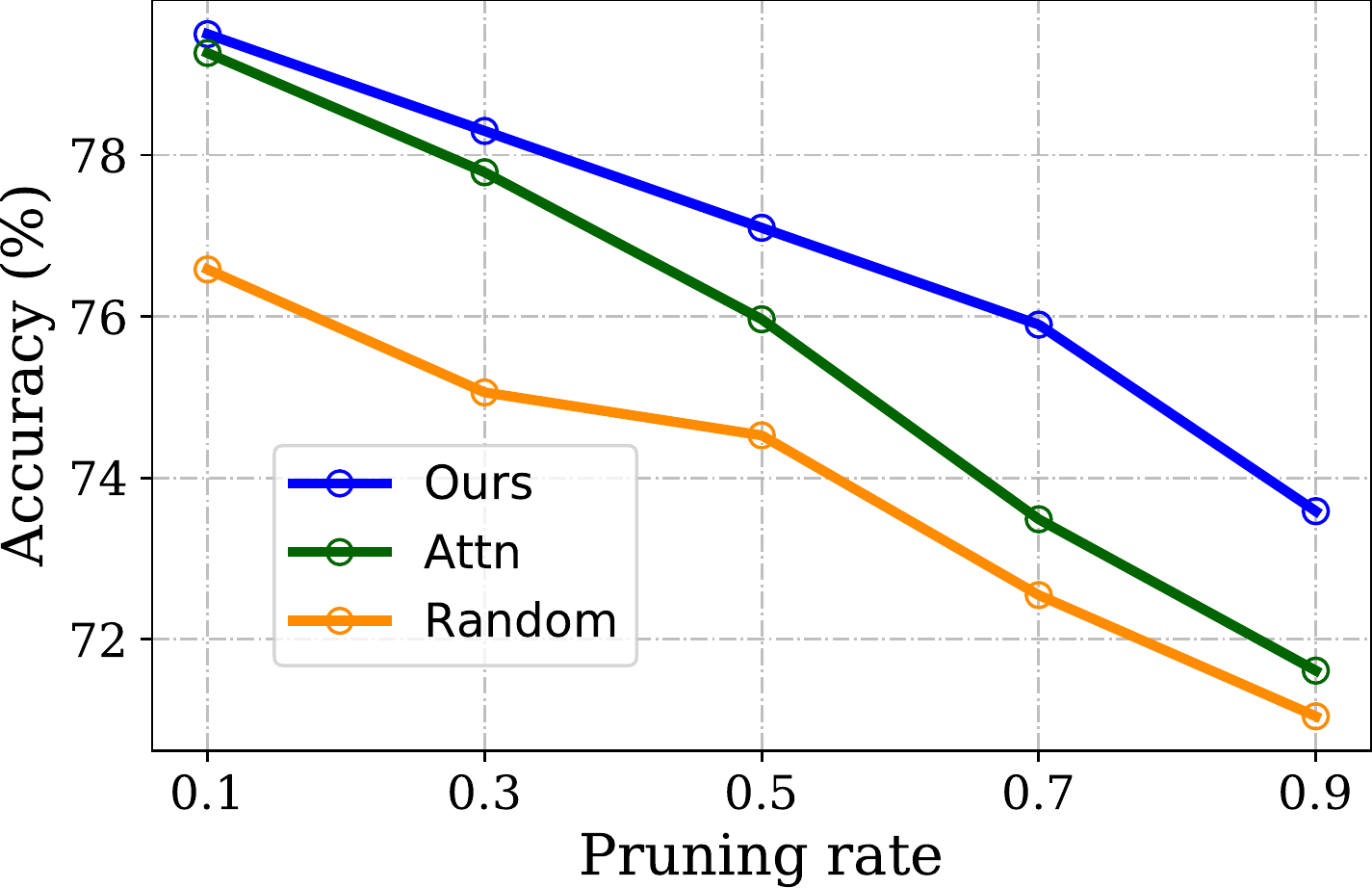}	%
		\caption{The 3-th layer.}
	\end{subfigure}
	\hspace{0.5em}
	\begin{subfigure}{0.3\linewidth}
		\centering
		\includegraphics[width=0.9\linewidth]{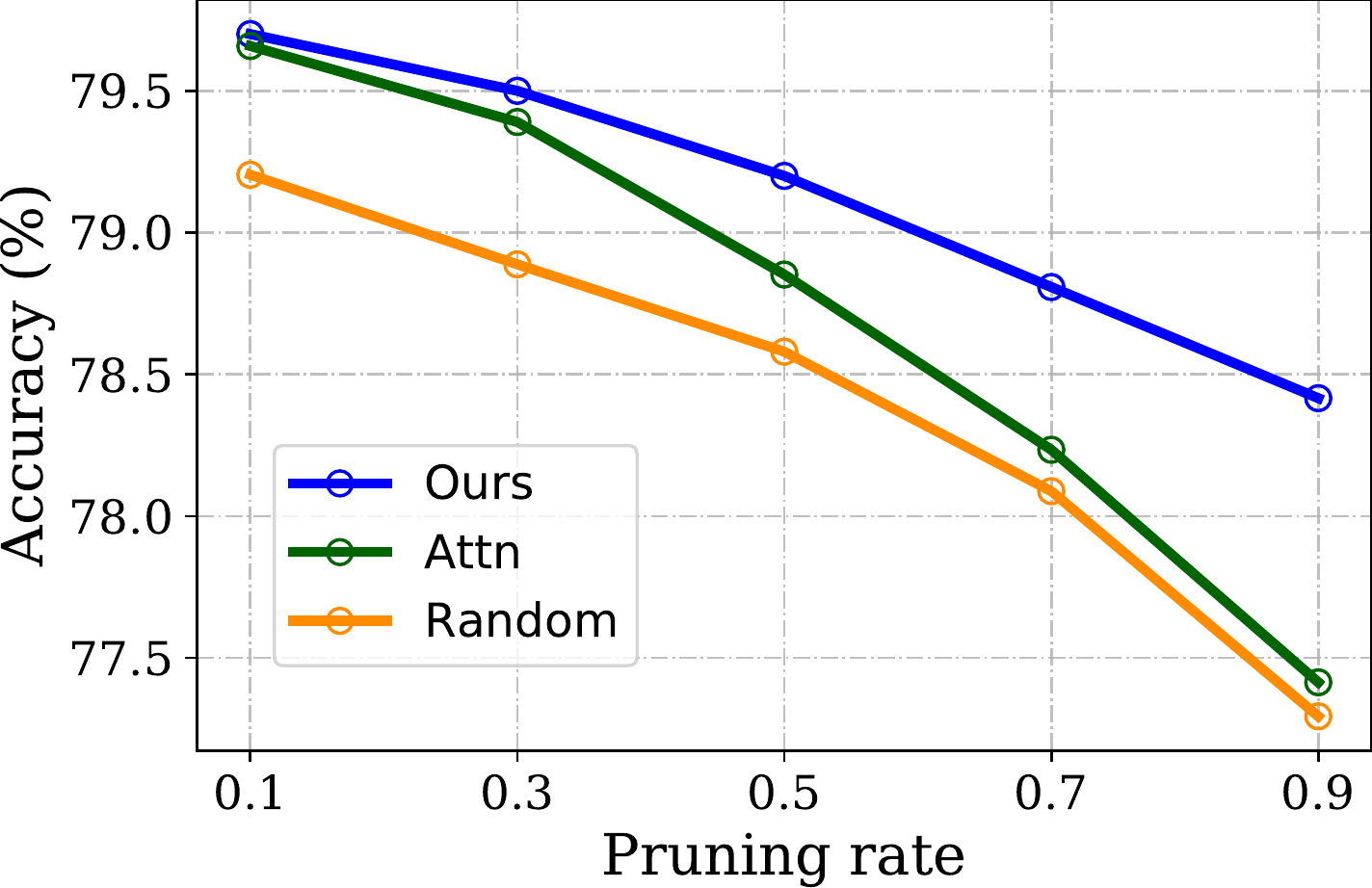}
		\caption{The 7-th layer.}
	\end{subfigure}
	\hspace{0.5em}
	\begin{subfigure}{0.3\linewidth}
		\centering
		\includegraphics[width=0.9\linewidth]{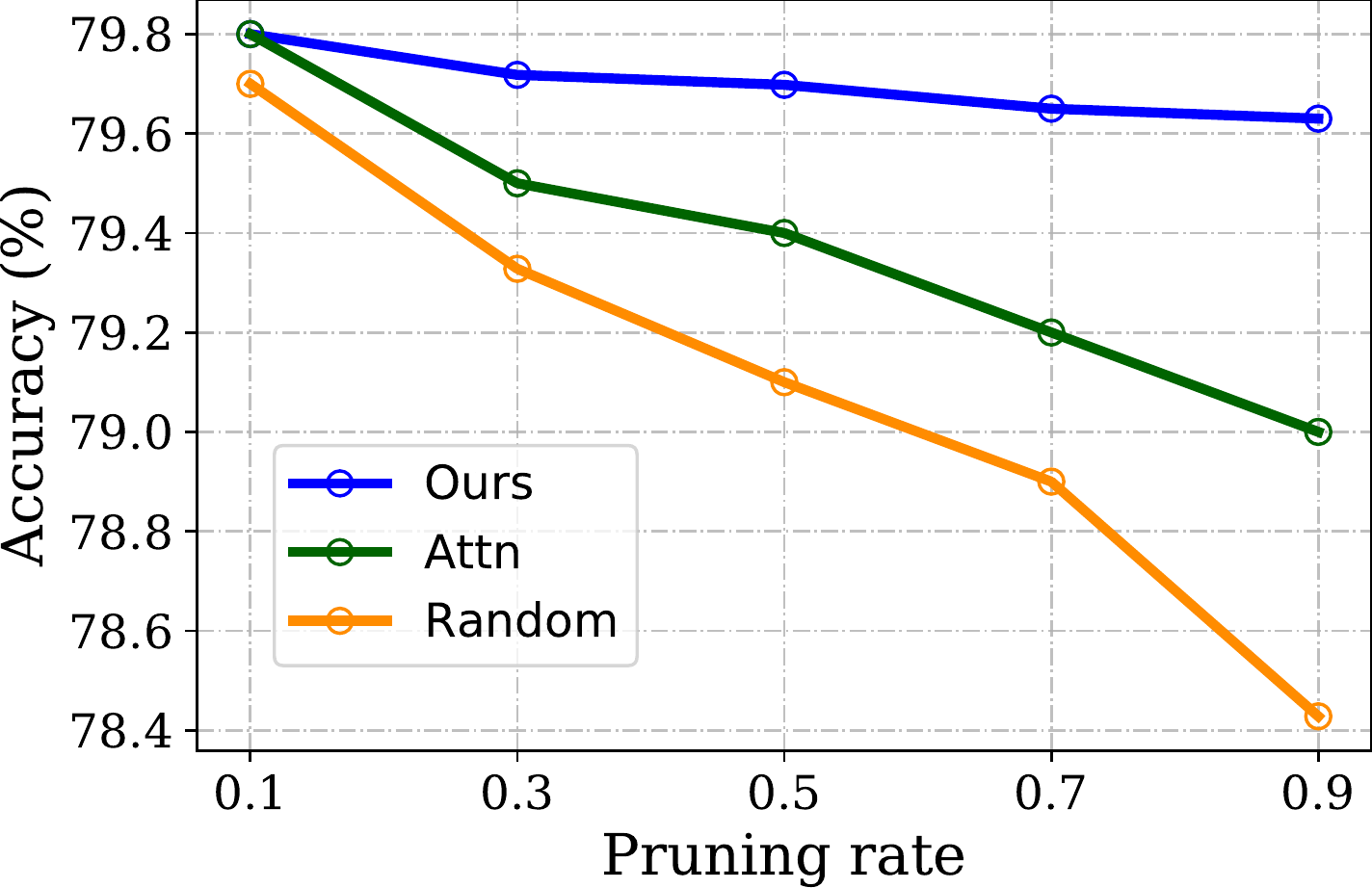}
		\caption{The 11-th layer.}
	\end{subfigure}
	\vspace{-2mm}
	\caption{Accuracy \wrt the pruning rate of  patches in a single layer.}
	\label{fig-acc}
	\vspace{-4mm}
\end{figure*}

\textbf{The effect of global tolerant error $\epsilon$.}
The tolerant error $\epsilon$ affects the balance between computational cost and accuracy of the pruned model, which is empirically investigated in Figure~\ref{fig-eps}.  Increasing $\epsilon$ implies larger reconstructed error between features of the pruned DeiT and original DeiT, while more patches can be pruned to achieve higher acceleration rate. When the reduction of FLOPs is less than 45\%, there is almost no accuracy loss~(less than 0.4\%), which is because that a large number of patches are redundant. 

\textbf{Learned patch pruning \vs~uniform pruning.}
In our method, the number of patches required in a specific layers is determined automatically via the global tolerant value $\epsilon$. The architecture of the pruned DeiT model is shown in  Figure~\ref{fig-even}. We can see that a pyramid-like architecture is obtained, where most of the patches in deep layers are pruned while more patches are preserved in shallow layers.  To validate the superiority of the learned pyramid architecture, we also implement a baseline that uniformly prunes all the layers with the similar pruning rate. We compare the results of the proposed patch slimming method and uniform pruning in Table~\ref{tab-even}. The accuracy of uniform pruning is only 77.2\%, which incurs a large accuracy drop (-2.6\%).  

To better understand the behavior of patch pruning in the vision transformer, we prune patches in a single layer to see how the test accuracy change. 
The experiments are conducted with DeiT-S model on ImageNet.

\textbf{Redundancy \wrt depth.} We test the patch redundancy of different layers to verify the motivation of top-down patch slimming procedure. We prune a single layer and keep the same pruning ratio for different layers. Figure~\ref{fig-layers} shows the accuracy of the pruned model after pruning patches of a certain layer, and each line denotes pruning patches with a given pruning rate. In deeper layers, more patches can be safely removed without large impact on the final performance. However, removing a patch in lower layers usually incurs obvious accuracy drop.  The patch redundancy is extremely different across layers and deeper layers have more redundancy, which can be attributed to that the attention mechanism aggregates features from different patches and the deeper patches have been fully communicated with each other. This phenomenon is different from the channel pruning in CNNs, where lower layers are observed to have more channel-level redundancy~(Figure~4 in \cite{he2017channel}).

\textbf{Effectiveness of impact estimation.} We define the scores $\s_l$ in Eq.~\ref{eq-imp} to approximate significance of a patch by propagating the reconstruction error of effective patches in output layer. To validate its effectiveness, we compare it with two baseline scores: `Random' denotes removing patches in the layer randomly, and `Attn' approximates the importance of a patch only with the norm of its attention map in the current layer. We compare the three scores by utilizing them to prune patches in different layer. The results are presented in Figure~\ref{fig-acc}, where  $y$-axis is the test accuracy of the pruned models~(without fine-tuning). From the results, our impact estimation manner suffers less accuracy loss than the others with the same pruning rate~(\eg, 50\%). It implies that our method can effectively identify patches that really make contributions to the final prediction.

\vspace{-1mm}
\section{Conclusion} 
\vspace{-1mm}
We propose to accelerate vision transformers by reducing the number of patches required to calculate. Considering that the attention mechanism aggregates different patches layer-by-layer, a top-down framework is developed to excavate the redundant patches. The importance of each patch is also  approximated according to its impact on the effective output features.
After pruning, a compact vision transformer with a pyramid-like architecture is obtained. Extensive experiments on benchmark datasets validate that the proposed method can  effectively reduce the computational cost. In the future, we plan to combine the patch slimming methods with more compression technologies~(\eg, weight pruning, model quantization) to explore extremely efficient vision transformers.  

\noindent\textbf{Acknowledgment.} This work is supported by National Natural Science Foundation of China under Grant No.61876007, Australian Research Council under Project DP210101859 and the University of Sydney SOAR Prize.

{\small
\bibliographystyle{ieee_fullname}
\bibliography{egbib}

\begin{thebibliography}{10}\itemsep=-1pt

\bibitem{brown2020language}
Tom~B Brown, Benjamin Mann, Nick Ryder, Melanie Subbiah, Jared Kaplan, Prafulla
  Dhariwal, Arvind Neelakantan, Pranav Shyam, Girish Sastry, Amanda Askell,
  et~al.
\newblock Language models are few-shot learners.
\newblock {\em arXiv preprint arXiv:2005.14165}, 2020.

\bibitem{chen2020pre}
Hanting Chen, Yunhe Wang, Tianyu Guo, Chang Xu, Yiping Deng, Zhenhua Liu, Siwei
  Ma, Chunjing Xu, Chao Xu, and Wen Gao.
\newblock Pre-trained image processing transformer.
\newblock {\em arXiv preprint arXiv:2012.00364}, 2020.

\bibitem{courbariaux2016binarized}
Matthieu Courbariaux, Itay Hubara, Daniel Soudry, Ran El-Yaniv, and Yoshua
  Bengio.
\newblock Binarized neural networks: Training deep neural networks with weights
  and activations constrained to+ 1 or-1.
\newblock {\em arXiv preprint arXiv:1602.02830}, 2016.

\bibitem{imagenet}
Jia Deng, Wei Dong, Richard Socher, Li-Jia Li, Kai Li, and Li Fei-Fei.
\newblock Imagenet: A large-scale hierarchical image database.
\newblock In {\em 2009 IEEE conference on computer vision and pattern
  recognition}, pages 248--255. Ieee, 2009.

\bibitem{devlin2018bert}
Jacob Devlin, Ming-Wei Chang, Kenton Lee, and Kristina Toutanova.
\newblock Bert: Pre-training of deep bidirectional transformers for language
  understanding.
\newblock {\em arXiv preprint arXiv:1810.04805}, 2018.

\bibitem{dosovitskiy2020image}
Alexey Dosovitskiy, Lucas Beyer, Alexander Kolesnikov, Dirk Weissenborn,
  Xiaohua Zhai, Thomas Unterthiner, Mostafa Dehghani, Matthias Minderer, Georg
  Heigold, Sylvain Gelly, et~al.
\newblock An image is worth 16x16 words: Transformers for image recognition at
  scale.
\newblock {\em arXiv preprint arXiv:2010.11929}, 2020.

\bibitem{dupuis1991lipschitz}
Paul Dupuis and Hitoshi Ishii.
\newblock On lipschitz continuity of the solution mapping to the skorokhod
  problem, with applications.
\newblock {\em Stochastics: An International Journal of Probability and
  Stochastic Processes}, 35(1):31--62, 1991.

\bibitem{funahashi1993approximation}
Ken-ichi Funahashi and Yuichi Nakamura.
\newblock Approximation of dynamical systems by continuous time recurrent
  neural networks.
\newblock {\em Neural networks}, 6(6):801--806, 1993.

\bibitem{gao2018dynamic}
Xitong Gao, Yiren Zhao, Lukasz Dudziak, Robert Mullins, and Cheng zhong Xu.
\newblock Dynamic channel pruning: Feature boosting and suppression.
\newblock In {\em International Conference on Learning Representations}, 2019.

\bibitem{goyal2020power}
Saurabh Goyal, Anamitra~Roy Choudhury, Saurabh Raje, Venkatesan Chakaravarthy,
  Yogish Sabharwal, and Ashish Verma.
\newblock Power-bert: Accelerating bert inference via progressive word-vector
  elimination.
\newblock In {\em International Conference on Machine Learning}, pages
  3690--3699. PMLR, 2020.

\bibitem{han2022pyramidtnt}
Kai Han, Jianyuan Guo, Yehui Tang, and Yunhe Wang.
\newblock Pyramidtnt: Improved transformer-in-transformer baselines with
  pyramid architecture.
\newblock {\em arXiv preprint arXiv:2201.00978}, 2022.

\bibitem{han2021transformer}
Kai Han, An Xiao, Enhua Wu, Jianyuan Guo, Chunjing Xu, and Yunhe Wang.
\newblock Transformer in transformer.
\newblock {\em arXiv preprint arXiv:2103.00112}, 2021.

\bibitem{han2015deep}
Song Han, Huizi Mao, and William~J Dally.
\newblock Deep compression: Compressing deep neural networks with pruning,
  trained quantization and huffman coding.
\newblock {\em arXiv preprint arXiv:1510.00149}, 2015.

\bibitem{han2015learning}
Song Han, Jeff Pool, John Tran, and William~J Dally.
\newblock Learning both weights and connections for efficient neural networks.
\newblock {\em arXiv preprint arXiv:1506.02626}, 2015.

\bibitem{he2019filter}
Yang He, Ping Liu, Ziwei Wang, Zhilan Hu, and Yi Yang.
\newblock Filter pruning via geometric median for deep convolutional neural
  networks acceleration.
\newblock In {\em Proceedings of the IEEE/CVF Conference on Computer Vision and
  Pattern Recognition}, pages 4340--4349, 2019.

\bibitem{he2017channel}
Yihui He, Xiangyu Zhang, and Jian Sun.
\newblock Channel pruning for accelerating very deep neural networks.
\newblock In {\em Proceedings of the IEEE International Conference on Computer
  Vision}, pages 1389--1397, 2017.

\bibitem{hinton2015distilling}
Geoffrey Hinton, Oriol Vinyals, and Jeff Dean.
\newblock Distilling the knowledge in a neural network.
\newblock {\em arXiv preprint arXiv:1503.02531}, 2015.

\bibitem{hua2018channel}
Weizhe Hua, Yuan Zhou, Christopher De~Sa, Zhiru Zhang, and G~Edward Suh.
\newblock Channel gating neural networks.
\newblock {\em arXiv preprint arXiv:1805.12549}, 2018.

\bibitem{mindspore}
Huawei.
\newblock Mindspore.
\newblock \url{https://www.mindspore.cn/}, 2020.

\bibitem{jiang2021all}
Zihang Jiang, Qibin Hou, Li Yuan, Zhou Daquan, Yujun Shi, Xiaojie Jin, Anran
  Wang, and Jiashi Feng.
\newblock All tokens matter: Token labeling for training better vision
  transformers.
\newblock In {\em Thirty-Fifth Conference on Neural Information Processing
  Systems}, 2021.

\bibitem{kim2018spatio}
Tae~Hyun Kim, Mehdi~SM Sajjadi, Michael Hirsch, and Bernhard Scholkopf.
\newblock Spatio-temporal transformer network for video restoration.
\newblock In {\em Proceedings of the European Conference on Computer Vision
  (ECCV)}, pages 106--122, 2018.

\bibitem{lan2018knowledge}
Xu Lan, Xiatian Zhu, and Shaogang Gong.
\newblock Knowledge distillation by on-the-fly native ensemble.
\newblock {\em arXiv preprint arXiv:1806.04606}, 2018.

\bibitem{lebedev2016fast}
Vadim Lebedev and Victor Lempitsky.
\newblock Fast convnets using group-wise brain damage.
\newblock In {\em Proceedings of the IEEE Conference on Computer Vision and
  Pattern Recognition}, pages 2554--2564, 2016.

\bibitem{li2016pruning}
Hao Li, Asim Kadav, Igor Durdanovic, Hanan Samet, and Hans~Peter Graf.
\newblock Pruning filters for efficient convnets.
\newblock {\em arXiv preprint arXiv:1608.08710}, 2016.

\bibitem{liu2017learning}
Zhuang Liu, Jianguo Li, Zhiqiang Shen, Gao Huang, Shoumeng Yan, and Changshui
  Zhang.
\newblock Learning efficient convolutional networks through network slimming.
\newblock In {\em Proceedings of the IEEE International Conference on Computer
  Vision}, pages 2736--2744, 2017.

\bibitem{liu2021Swin}
Ze Liu, Yutong Lin, Yue Cao, Han Hu, Yixuan Wei, Zheng Zhang, Stephen Lin, and
  Baining Guo.
\newblock Swin transformer: Hierarchical vision transformer using shifted
  windows.
\newblock {\em arXiv preprint arXiv:2103.14030}, 2021.

\bibitem{liu2021post}
Zhenhua Liu, Yunhe Wang, Kai Han, Wei Zhang, Siwei Ma, and Wen Gao.
\newblock Post-training quantization for vision transformer.
\newblock {\em Advances in Neural Information Processing Systems}, 34, 2021.

\bibitem{liu2018frequency}
Zhenhua Liu, Jizheng Xu, Xiulian Peng, and Ruiqin Xiong.
\newblock Frequency-domain dynamic pruning for convolutional neural networks.
\newblock {\em Advances in neural information processing systems}, 31, 2018.

\bibitem{NEURIPS2019_2c601ad9}
Paul Michel, Omer Levy, and Graham Neubig.
\newblock Are sixteen heads really better than one?
\newblock In {\em Advances in Neural Information Processing Systems},
  volume~32, 2019.

\bibitem{pan2021scalable}
Zizheng Pan, Bohan Zhuang, Jing Liu, Haoyu He, and Jianfei Cai.
\newblock Scalable vision transformers with hierarchical pooling.
\newblock In {\em Proceedings of the IEEE/CVF International Conference on
  Computer Vision}, pages 377--386, 2021.

\bibitem{paszke2017automatic}
Adam Paszke, Sam Gross, Soumith Chintala, Gregory Chanan, Edward Yang, Zachary
  DeVito, Zeming Lin, Alban Desmaison, Luca Antiga, and Adam Lerer.
\newblock Automatic differentiation in pytorch.
\newblock 2017.

\bibitem{rastegari2016xnor}
Mohammad Rastegari, Vicente Ordonez, Joseph Redmon, and Ali Farhadi.
\newblock Xnor-net: Imagenet classification using binary convolutional neural
  networks.
\newblock In {\em European conference on computer vision}, pages 525--542.
  Springer, 2016.

\bibitem{DBLP:conf/iclr/SuYH00Z021}
Xiu Su, Shan You, Tao Huang, Fei Wang, Chen Qian, Changshui Zhang, and Chang
  Xu.
\newblock Locally free weight sharing for network width search.
\newblock In {\em {ICLR}}. OpenReview.net, 2021.

\bibitem{su2021bcnet}
Xiu Su, Shan You, Fei Wang, Chen Qian, Changshui Zhang, and Chang Xu.
\newblock Bcnet: Searching for network width with bilaterally coupled network.
\newblock In {\em Proceedings of the IEEE/CVF Conference on Computer Vision and
  Pattern Recognition}, pages 2175--2184, 2021.

\bibitem{tang2021augmented}
Yehui Tang, Kai Han, Chang Xu, An Xiao, Yiping Deng, Chao Xu, and Yunhe Wang.
\newblock Augmented shortcuts for vision transformers.
\newblock {\em Advances in Neural Information Processing Systems}, 34, 2021.

\bibitem{tang2021manifold}
Yehui Tang, Yunhe Wang, Yixing Xu, Yiping Deng, Chao Xu, Dacheng Tao, and Chang
  Xu.
\newblock Manifold regularized dynamic network pruning.
\newblock In {\em Proceedings of the IEEE/CVF Conference on Computer Vision and
  Pattern Recognition}, pages 5018--5028, 2021.

\bibitem{tang2020scop}
Yehui Tang, Yunhe Wang, Yixing Xu, Dacheng Tao, Chunjing Xu, Chao Xu, and Chang
  Xu.
\newblock Scop: Scientific control for reliable neural network pruning.
\newblock {\em Advances in Neural Information Processing Systems},
  33:10936--10947, 2020.

\bibitem{touvron2020training}
Hugo Touvron, Matthieu Cord, Matthijs Douze, Francisco Massa, Alexandre
  Sablayrolles, and Herv{\'e} J{\'e}gou.
\newblock Training data-efficient image transformers \& distillation through
  attention.
\newblock {\em arXiv preprint arXiv:2012.12877}, 2020.

\bibitem{voita2019analyzing}
Elena Voita, David Talbot, Fedor Moiseev, Rico Sennrich, and Ivan Titov.
\newblock Analyzing multi-head self-attention: Specialized heads do the heavy
  lifting, the rest can be pruned.
\newblock {\em arXiv preprint arXiv:1905.09418}, 2019.

\bibitem{wang2021pyramid}
Wenhai Wang, Enze Xie, Xiang Li, Deng-Ping Fan, Kaitao Song, Ding Liang, Tong
  Lu, Ping Luo, and Ling Shao.
\newblock Pyramid vision transformer: A versatile backbone for dense prediction
  without convolutions.
\newblock {\em arXiv preprint arXiv:2102.12122}, 2021.

\bibitem{wen2016learning}
Wei Wen, Chunpeng Wu, Yandan Wang, Yiran Chen, and Hai Li.
\newblock Learning structured sparsity in deep neural networks.
\newblock {\em arXiv preprint arXiv:1608.03665}, 2016.

\bibitem{ye2018rethinking}
Jianbo Ye, Xin Lu, Zhe Lin, and James~Z Wang.
\newblock Rethinking the smaller-norm-less-informative assumption in channel
  pruning of convolution layers.
\newblock {\em arXiv preprint arXiv:1802.00124}, 2018.

\bibitem{yuan2021tokens}
Li Yuan, Yunpeng Chen, Tao Wang, Weihao Yu, Yujun Shi, Francis~EH Tay, Jiashi
  Feng, and Shuicheng Yan.
\newblock Tokens-to-token vit: Training vision transformers from scratch on
  imagenet.
\newblock {\em arXiv preprint arXiv:2101.11986}, 2021.

\bibitem{zhu2021visual}
Mingjian Zhu, Kai Han, Yehui Tang, and Yunhe Wang.
\newblock Visual transformer pruning.
\newblock {\em arXiv preprint arXiv:2104.08500}, 2021.

\end{thebibliography}
}

\end{document}